\documentclass{article}
\title{Imparo is complete by inverse subsumption}
\author{David Toth}

\usepackage[left=2cm,right=2cm,top=2cm,bottom=3cm]{geometry}
\usepackage{amsthm}
\usepackage{amsmath}
\usepackage{amssymb}
\usepackage{listings}
\usepackage{nomencl}
\usepackage{color}
\usepackage{hyperref}

\theoremstyle{plain}
\newtheorem{thm}{Theorem} % reset theorem numbering for each chapter

\theoremstyle{plain}

\theoremstyle{plain}

\theoremstyle{definition}
\newtheorem{defn}[thm]{Definition} % definition numbers are dependent on theorem numbers

\theoremstyle{definition}

\theoremstyle{definition}

\theoremstyle{definition}

\theoremstyle{definition}

\theoremstyle{definition}

\theoremstyle{plain}

\newcommand{\subsumes}{\succeq}

\begin{document}
   \maketitle

\section*{Abstract}
In Inverse subsumption for complete explanatory induction{\cite{yamamoto2012inverse} Yamamoto et al. investigate which inductive logic programming systems can learn a correct hypothesis $H$ by using the inverse subsumption instead of inverse entailment. We prove that inductive logic programming system Imparo is complete by inverse subsumption for learning a correct definite hypothesis $H$ wrt the definite background theory $B$ and ground atomic examples $E$, by establishing that there exists a connected theory $T$ for $B$ and $E$ such that $H$ subsumes $T$.

\smallskip
\noindent \textbf{Keywords.} Imparo. Inverse subsumption. Inductive Logic Programming.

\section{Introduction}
A task in Inductive Logic Programming (ILP) is given logic theories background knowledge $B$, and examples $E$ to find a logic theory $H$ that explains the examples $E$ from the background knowledge and is consistent with the background knowledge, i.e. $B \land H \models E$ and $B \land H \not\models false$. Such a logic theory $H$ is called a \emph{correct hypothesis} wrt $B$ and $E$ and a system that takes as an input theories $B$ and $E$ and returns a correct hypothesis $H$ is called an ILP system.

ILP systems find a hypothesis $H$ using the principle of the inverse entailment\cite{muggleton1995inverse} for theories $B$, $E$, $H$: $B \land H \models E \iff B \land \neg E \models \neg H$. First they construct an intermediate theory $F$ called a bridge theory satisfying the conditions $B \land \neg E \models F$ and $F \models \neg H$. Then as $H \models \neg F$, they generalize the negation of the bridge theory $F$ with the anti-entailment. However, the operation of the anti-entailment since being less deterministic may be computationally more expensive than the operation of the inverse subsumption (anti-subsumption). Therefore  Yamamoto et al. {\cite{yamamoto2012inverse} investigate how the procedure of the inverse subsumption can be realized in ILP systems in a complete way.

The negation of Imparo's bridge theory is called a connected theory. While Kimber proves that for every hypothesis $H$ there exists a connected theory $T$ such that $H$ entails $T$ ($H \models T$), we prove that for every hypothesis $H$ there exists a connected theory $T$ such that $H$ (theory-)subsumes $T$ ($H \subsumes T$) and hence extend Imparo's procedure for finding a hypothesis from anti-entailment to anti-subsumption preserving its completeness.

\section{Background}
\begin{defn}\cite{yamamoto2012inverse}Let $S$ and $T$ be two clausal theories.
Then, $S$ \emph{theory-subsumes} $T$, denoted by $S \subsumes T$, if for any clause $D \in T$, there is a clause $C \in S$ such
that $C \subsumes D$. The inverse relation of the (theory-)subsumption is called \emph{anti-subsumption}.
\end{defn}

\begin{defn}
(Definition 2.83 in \cite{kimber2012learning}) An \emph{open program} is a triple $\langle B, U, I \rangle$ where $B$ is a program, $U$ is a set of predicates called \emph{undefined} or \emph{abducible}, and $I$ is a set of first-order
axioms. If $B$ is a definite program and $I$ is a set of definite goals, then $P$ is a 
\emph{definite open program}.
\end{defn}

\begin{defn}(Correct hypothesis)
Let $P = \langle B, U, I \rangle$ be a definite open
program, $E$ a logic theory theory called examples, $H$ is an \emph{inductive solution} for $P, E$ iff $B \cup H \models E$ and $B \cup H \cup I \not\models false$.
\end{defn}

\begin{defn}
(Definition 4.1. in \cite{kimber2012learning}
Let $C$ be a program clause $A \leftarrow \{L_1,..., L_n\}$.
The atom $A$ is denoted by $C^+$ and the set
$\{L_1,... ,L_n\}$ is denoted by $C^-$.
\end{defn}

\begin{defn}(Definition 4.2 in \cite{kimber2012learning}). Let $\Sigma$ be a set of $m$ program clauses $\{C_1,... ,C_m\}$.
The set $\{C_1^+,..., C_m^+\}$
is denoted by $\Sigma^+$ and the set $C_1^- \cup ... \cup C_m^-$
is denoted by $\Sigma^-$.
\end{defn}

\begin{defn}(Definition 4.3 in \cite{kimber2012learning})
Let $\langle P = B, U, I \rangle$ be a definite open
program, and let $E$ be a ground atom.
Let $T_1 , . . . , T_n$ be $n$ disjoint sets of ground definite
clauses defining only predicates in $U$. $T = T_1 \cup ... \cup T_n$
is an \emph{$n$-layered Connected Theory}
for $P$ and $E$ if and only if the following conditions are satisfied:
\begin{itemize}
\item $B \models T^-_n$,
\item $B \cup T^+_n \cup ... \cup T^+_{i+1} \models T_i^-$, for all $i (1 \le i < n)$,
\item $B \cup T^+_n \cup ... \cup T^+_1 \models E$, and
\item $B \cup T \cup I$ is consistent.
\end{itemize}
\end{defn}

\begin{defn}
(Definition 4.4 in \cite{kimber2012learning}).
Let $P = \langle B, U, I \rangle$ be a definite open program, and
let $E$ be a ground atom. A \emph{Connected Theory} for $P$ and $E$ is an $n$-layered Connected Theory for $P$ and $E$, for some $n \ge 1$.
\end{defn}

\begin{defn}\cite{kimber2012learning} Let $P=\langle B, U, I \rangle$ be an open definite program, let $E$ be a ground atom. A set $H$ of definite clauses is derivable from $P$ and $E$ by
\emph{Connected Theory Generalisation}, denoted $P, E \vdash_{CTG} H$, iff there is a $T$ such
that $T$ is a Connected Theory for $P$ and $E$, and $H \models T$ , and $B \cup H \cup I$ is consistent.
\end{defn}

\begin{thm}\label{implicationByGroundClauses}
(Implication by Ground Clauses \cite{nienhuys1997foundations}). Let $\Sigma$ be a non-empty set of clauses,
and $C$ be a ground clause. Then $\Sigma \models C$ if and only if there is a finite set $\Sigma_g$ of ground
instances of clauses from $\Sigma$, such that $\Sigma_g \models C$.
\end{thm}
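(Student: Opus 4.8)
The plan is to prove the two directions separately: the reverse implication is a one-line transitivity argument, while the forward implication reduces, via the standard refutation reformulation, to Herbrand's theorem. Throughout I will use two standard facts: that a (universally quantified) clause logically entails each of its ground instances, and that for any clause set $S$ and clause $D$ we have $S \models D$ iff $S \cup \{\neg D\}$ is unsatisfiable.

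For the reverse direction, assume there is a finite set $\Sigma_g$ of ground instances of clauses from $\Sigma$ with $\Sigma_g \models C$. Every clause of $\Sigma_g$ is an instance $D\theta$ of some $D \in \Sigma$, and since a clause entails all of its instances, $\Sigma \models D\theta$; hence $\Sigma \models \Sigma_g$, and transitivity of entailment together with $\Sigma_g \models C$ gives $\Sigma \models C$. This direction uses nothing about $C$ being ground.

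For the forward direction, assume $\Sigma \models C$, equivalently that $\Sigma \cup \{\neg C\}$ is unsatisfiable. The decisive step is to turn $\neg C$ into a set of clauses, and this is exactly where the hypothesis that $C$ is \emph{ground} is used. Writing $C = L_1 \lor \cdots \lor L_k$ as a disjunction of ground literals, $\neg C$ is equivalent to the finite set of ground unit clauses $N = \{\neg L_1, \ldots, \neg L_k\}$, with no existential quantifiers or Skolem constants arising because $C$ has no variables. Thus $\Sigma \cup N$ is an unsatisfiable set of clauses, so by Herbrand's theorem some finite set $G$ of ground instances of clauses in $\Sigma \cup N$ is already unsatisfiable. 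The instances in $G$ coming from $N$ are among the (already ground) clauses of $N$, so if I collect the instances in $G$ that come from $\Sigma$ into a finite set $\Sigma_g$, then $G \subseteq \Sigma_g \cup N$ and hence $\Sigma_g \cup N$ is unsatisfiable. Since $N$ is the clausal form of $\neg C$, this reads $\Sigma_g \cup \{\neg C\}$ unsatisfiable, i.e.\ $\Sigma_g \models C$, with $\Sigma_g$ a finite set of ground instances of clauses from $\Sigma$, as required.

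The main obstacle I anticipate is purely the careful justification of the clausal rewriting of $\neg C$ and the accompanying Herbrand bookkeeping, rather than any deep argument. Groundness of $C$ is the crux: it is what lets $\neg C$ be written as a conjunction of ground unit clauses, so that the ground instances extracted by Herbrand's theorem remain instances of clauses in the original signature; if $C$ contained variables, negation would force Skolemization and the conclusion would fail. A minor point to verify is that ground instances are taken over the Herbrand universe of the combined language of $\Sigma$ and $C$, so that the constants occurring in the $L_i$ are available; the non-emptiness of $\Sigma$ guarantees this universe is non-degenerate, and the finiteness of $G$ is precisely what Herbrand's theorem supplies.
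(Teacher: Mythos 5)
Your proof is correct, but note that the paper itself offers no proof of this theorem to compare against: it is stated as an imported result, quoted with attribution to \cite{nienhuys1997foundations}, and used as a black box in the proof of the completeness theorem. Your argument is the standard one from that literature: the reverse direction by transitivity of entailment (each ground instance is entailed by its clause), and the forward direction by reformulating $\Sigma \models C$ as unsatisfiability of $\Sigma \cup \{\neg C\}$, splitting the ground $\neg C$ into a finite set $N$ of ground unit clauses without Skolemization, applying Herbrand's theorem to extract a finite unsatisfiable set $G$ of ground instances, and discarding the instances coming from $N$ to obtain $\Sigma_g$. The bookkeeping step $G \subseteq \Sigma_g \cup N$, hence $\Sigma_g \cup \{\neg C\}$ unsatisfiable, is exactly right, and your identification of groundness of $C$ as the crux is the correct diagnosis of why the theorem is stated for ground $C$ only.

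One minor imprecision: your closing remark that non-emptiness of $\Sigma$ guarantees a non-degenerate Herbrand universe is not quite accurate, since a non-empty $\Sigma$ may contain no constant or function symbols at all (e.g.\ $\Sigma = \{p(x)\}$). What actually handles this is either the constants occurring in the ground clause $C$ itself, or, when $C$ is the empty clause, the standard convention of adjoining a fresh constant to form the Herbrand universe. The non-emptiness hypothesis in the statement is inherited from the source and plays no essential role in your argument; this does not affect correctness.
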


\begin{thm}\label{completeness_ctg}
\emph{Completeness of connected theory generalization}(Theorem4.6 in \cite{kimber2012learning})
Let $\langle B, U, I \rangle$ be a definite open program,
let $H$ be a definite program, and let $e$ be an atom.
If $H$ is an inductive solution for $P$ and
$E = \{e\}$, then $H$ is derivable from $P$ and $E$ by connected theory generalisation.
\end{thm}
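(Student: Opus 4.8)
The plan is to extract a connected theory directly from a ground proof of $e$ and to layer it by the abducible atoms occurring in that proof. Since $H$ is an inductive solution, $B \cup H \models E$ with $E = \{e\}$, where $B \cup H$ is a definite program and $e$ is a ground atom; moreover $B \cup H \cup I$ is consistent, which will immediately discharge the consistency requirement once $H \models T$ is established (any model of $B \cup H \cup I$ is a model of $T$, hence of $B \cup T \cup I$). First I would apply Theorem \ref{implicationByGroundClauses}: as $B \cup H \models e$ and $e$ is a ground clause, there is a finite set $G$ of ground instances of clauses of $B \cup H$ with $G \models e$, and since $G$ is a finite ground definite program, SLD-resolution yields a finite ground proof tree $\pi$ of $e$ whose internal expansions each apply a ground instance of a clause of $B \cup H$.

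Next I would stratify the abducible atoms of $\pi$. Because the predicates of $U$ are undefined in $B$, every occurrence of an abducible atom in $\pi$ is the head of a ground instance of a clause of $H$; these are the atoms I will cut at. To each abducible atom $a$ occurring in $\pi$ I assign a rank equal to one plus the maximal rank of an abducible atom occurring strictly below it (and rank $1$ if none does), which is well defined because $\pi$ is finite. Reindexing so that the maximal rank becomes layer $n$ and rank $1$ becomes layer $1$, I collect into $T_i$ ground clauses of the form $a \leftarrow \beta_a$, with $a$ an abducible atom of the corresponding rank and $\beta_a$ chosen from the fragment of $\pi$ lying below $a$, keeping the $T_i$ pairwise disjoint. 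By construction every head in $T$ is an abducible atom, so $T$ defines only predicates of $U$, and each $T_i$ is a set of ground definite clauses as required.

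I would then verify the four conditions of a connected theory. The body $\beta_a$ of a layer-$i$ clause is to be chosen so that each of its atoms is either an abducible atom of strictly smaller rank, hence an element of $T_n^+ \cup \cdots \cup T_{i+1}^+$, or an atom whose supporting subtree rests only on $B$ and on such strictly deeper abducibles; this yields $B \cup T_n^+ \cup \cdots \cup T_{i+1}^+ \models T_i^-$ for $1 \le i < n$ and $B \models T_n^-$ for the bottom layer, which has no abducible beneath it. Since the top-rank abducibles together with $B$ reconstruct the entire proof $\pi$, we also obtain $B \cup T_n^+ \cup \cdots \cup T_1^+ \models e$. Finally, reading each clause $a \leftarrow \beta_a$ off the portion of $\pi$ that derives $a$ gives $H \models (a \leftarrow \beta_a)$, and hence $H \models T$; combined with the consistency remark above, this establishes $P, E \vdash_{CTG} H$.

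The step I expect to be the main obstacle is the simultaneous choice of the bodies $\beta_a$. The connected-theory conditions grant the use of $B$ when deriving each $T_i^-$, whereas the requirement $H \models T$ forbids any appeal to $B$ in justifying the clauses $a \leftarrow \beta_a$ themselves. When the clauses of $H$ define predicates outside $U$, the proof $\pi$ interleaves instances of clauses of $B$ with instances of clauses of $H$ having non-abducible heads, so neither the ``cut at every abducible'' body nor the ``cut at every $H$-clause'' body meets both demands directly. Resolving this requires compiling the non-abducible part of $H$'s contribution into abducible-headed implications that remain logical consequences of $H$ while their antecedents stay derivable from $B$ together with the strictly deeper abducible heads, and then checking by well-founded induction on the rank that the resulting layering is acyclic and its layers disjoint. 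This reconciliation, rather than the reduction to ground proofs or the bookkeeping of the layers, is the crux of the argument.
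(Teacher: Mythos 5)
Your proposal contains a genuine gap, and you name it yourself: the simultaneous choice of the bodies $\beta_a$ --- the one point where $H \models T$ and the layering conditions must be reconciled --- is exactly what you leave unresolved, and it is the entire content of the theorem. Everything before it (the appeal to Theorem \ref{implicationByGroundClauses}, the ground proof tree, the rank stratification) is bookkeeping; a proof that ends by declaring the reconciliation ``the crux of the argument'' has not proved the statement. Worse, the resolution you sketch (compiling the non-abducible part of $H$'s contribution into abducible-headed implications) cannot work in the generality you envisage: if $H$ is allowed to define predicates outside $U$, the theorem is simply false. Take $B = \{e \leftarrow q\}$, $U = \{p\}$, $H = \{q\}$, $I = \emptyset$: then $B \cup H \models e$ and consistency holds, but any connected theory $T$ may define only the predicate $p$, so $B \cup T_n^+ \cup \dots \cup T_1^+$ can never entail $e$, and no connected theory for $P$ and $E$ exists at all, let alone one entailed by $H$. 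The hidden assumption --- present in Kimber's Theorem 4.6 and throughout Imparo's setting, though elided in the restatement here --- is that $H$ defines only predicates in $U$.

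Once that assumption is in place, the tension you describe dissolves, and the correct choice is the simplest one, which is the paper's: take $S$ from Theorem \ref{implicationByGroundClauses} and set $T = S \cap ground(H)$; equivalently, take $\beta_a$ to be the actual body of the ground instance of the $H$-clause whose head is $a$. Then $T \subseteq ground(H)$ gives $H \models T$ immediately --- no appeal to $B$ is needed, because each clause of $T$ is itself an instance of a clause of $H$. For the layering, your worry about ``interleaved'' $B$-clauses is harmless: the $B$-clause instances occurring in the subtree proving a body atom of such a clause are exactly what the connected-theory conditions permit, and every $H$-clause instance occurring in that subtree has an abducible head and lies strictly deeper in the proof tree, hence lands (by your own rank function) in a layer of larger index, so $B \cup T_n^+ \cup \dots \cup T_{i+1}^+ \models T_i^-$ holds. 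In other words, the ``cut at every $H$-clause'' body does meet both demands at once, contrary to your claim, and no compilation step is needed. (To be fair, the paper also defers this layering verification to Kimber's thesis; but its construction makes $H \models T$ trivial, which is precisely the part your construction cannot deliver.)
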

\begin{proof}\cite{kimber2012learning}
The full proof is in Kimber's PhD thesis \cite{kimber2012learning}.
Since $H$ is a correct hypothesis for $P$ and $E$,
then $B \cup H \models E$ by definition.
Therefore, by \ref{implicationByGroundClauses}, there is a finite set $S$ of ground instances of clauses in $B \cup H$,
such that $S \models E$. Let $T = S \cap ground(H)$.
Since $T \subseteq S$, then $T$ is ground and finite, and
since $T \subseteq ground(H)$ then $H \models T$. 
Then Kimber proves that $T$ is a connected theory for $P$ and $E$.
\end{proof}

\section{Imparo's extension\cite{toth2014classification}}
We define a derivability of the hypothesis by the inverse subsumption.

\begin{defn}
Let $P=\langle B, U, I \rangle$ be an open definite program, $H$ be a correct hypothesis wrt $P$ and a ground example $E$, then $H$ is derivable by
\emph{connected theory inverse subsumption}
iff there exists a connected theory $T$ for $P$ and $E$ such that $H \subsumes T$.
We denote the statement by $P, E \vdash_{CTIS} H$.
\end{defn}

The result of this paper is:

\begin{thm}
\emph{Completeness of connected theory inverse subsumption}.
\label{completeness_ctis}
Let $\langle B, U, I \rangle$ be a definite open program,
let $H$ be a definite program, and let $e$ be an atom.
If $H$ is an inductive solution for $P$ and
$E = \{e\}$, then $H$ is derivable from $P$ and $E$ by connected theory inverse subsumption.
\end{thm}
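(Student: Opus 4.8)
The plan is to reuse, essentially verbatim, the connected theory constructed in the proof of Theorem~\ref{completeness_ctg}, and then to observe that it in fact enjoys the stronger relationship of being theory-subsumed by $H$, not merely entailed by $H$. Since $H$ is an inductive solution for $P$ and $E=\{e\}$, we have $B \cup H \models E$, so by Theorem~\ref{implicationByGroundClauses} there is a finite set $S$ of ground instances of clauses of $B \cup H$ with $S \models E$. Setting $T = S \cap ground(H)$, Kimber's argument (reproduced in the proof of Theorem~\ref{completeness_ctg}) already establishes that this $T$ is a connected theory for $P$ and $E$ (in particular that it is non-empty and correctly layered) and that $B \cup H \cup I$ is consistent. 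Thus the only thing left to supply is the upgrade from $H \models T$ to $H \subsumes T$.

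The decisive step is the remark that $T \subseteq ground(H)$ by construction. By definition every clause $D \in T$ is then a ground instance of some clause $C \in H$, i.e.\ $D = C\theta$ for a suitable ground substitution $\theta$. Since $C\theta = D$, a fortiori $C\theta \subseteq D$, so the clause $C$ subsumes $D$, that is $C \subsumes D$. As this holds for an arbitrary $D \in T$, the definition of theory-subsumption yields $H \subsumes T$. Combining this with the fact that $T$ is a connected theory for $P$ and $E$ gives, directly, $P, E \vdash_{CTIS} H$.

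The conceptual obstacle — and the reason the statement is not an immediate corollary of Theorem~\ref{completeness_ctg} — is that clausal entailment is strictly weaker than subsumption: in general $H \models T$ does not imply $H \subsumes T$, so one cannot blindly feed Kimber's connected theory into the anti-subsumption operator and hope completeness survives. The point I would make carefully is therefore that the \emph{particular} connected theory produced by the completeness-of-CTG construction is, by its very definition $T = S \cap ground(H)$, built entirely out of ground instances of clauses of $H$, and that a clause always subsumes each of its own ground instances. This closes the gap between the two notions for this canonical $T$, and shows that Imparo's procedure remains complete when anti-entailment is replaced by anti-subsumption.
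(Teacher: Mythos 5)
Your proposal is correct and follows essentially the same route as the paper: both take the connected theory $T = S \cap ground(H)$ from the proof of Theorem~\ref{completeness_ctg} and upgrade $H \models T$ to $H \subsumes T$ by the observation that every clause of $T$ is a ground instance of a clause of $H$, which that clause therefore subsumes. The paper phrases this as the chain $H \subsumes ground(H) \subsumes S \cap ground(H) = T$ with transitivity, whereas you unfold it clause by clause, but the argument is the same.
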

\begin{proof}
Construct a connected theory $T=S \cap ground(H)$ for $P$ and $E$ as in the proof of \ref{completeness_ctg}.
Then $H \subsumes ground(H) \subsumes S \cap ground(H) = T$,
hence $H \subsumes T$ by transitivity as required.
\end{proof}

\section*{Acknowledgements}
We thank Dr. Krysia Broda, Dr. Timothy Kimber, Prof. Yoshitaka Yamamoto for checking the contents of drafts of this paper.

\addcontentsline{toc}{chapter}{Bibliography}
\bibliography{bibliography}
\bibliographystyle{alpha}

\end{document}